\documentclass[doublecolumn]{IEEEtran}


\usepackage{cite}
\usepackage{cuted}
\stripsep -3pt plus 3pt minus 2pt 
\usepackage{caption}
\usepackage{comment}
\ifCLASSOPTIONcompsoc
\usepackage[caption=false, font=normalsize, labelfont=sf, textfont=sf]{subfig}
\else
\usepackage[caption=false, font=footnotesize]{subfig}

\ifCLASSINFOpdf
  \usepackage[pdftex]{graphicx}
\else
  \usepackage[dvips]{graphicx}
\fi
\usepackage{amsmath}
\usepackage{amssymb}
\usepackage{amsthm}
\makeatletter

\newcommand{\Rmnum}[1]{\expandafter\@slowromancap\romannumeral #1@}
\makeatother

\newtheorem{lemma}{Lemma}
\newtheorem{remark}{Remark}
\newtheorem{definition}{Definition}
\newtheorem{proposition}{Proposition}

\usepackage{subfig}

\usepackage{threeparttable}
\usepackage{makecell}

\usepackage{float}
\usepackage{bm}
\usepackage{algorithmic}
\usepackage{array}
\usepackage[table]{xcolor}
\usepackage{booktabs}
\usepackage{algorithm}
\usepackage{algorithmic}

\ifCLASSOPTIONcompsoc
  \usepackage[caption=false,font=normalsize,labelfont=sf,textfont=sf]{subfig}
\else
  \usepackage[caption=false,font=footnotesize]{subfig}
\fi

\ifCLASSOPTIONcaptionsoff
 \usepackage[nomarkers]{endfloat}
 \let\MYoriglatexcaption\caption
 \renewcommand{\caption}[2][\relax]{\MYoriglatexcaption[#2]{#2}}
\fi

\usepackage{url}

\hyphenation{op-tical net-works semi-conduc-tor}

\begin{document}

\title{From Discrete to Continuous: Deep Fair Clustering With Transferable Representations}

\author{
Xiang~Zhang,~\IEEEmembership{Student~Member,~IEEE} 
              
\thanks{The authors are with the School of Information Science and Engineering, Southeast University, Nanjing 210096, China (e-mail: xiangzhang369@seu.edu.cn).
 
}

}

\maketitle

\begin{abstract}
We consider the problem of deep fair clustering, which partitions data into clusters via the representations extracted by deep neural networks while hiding sensitive data attributes. To achieve fairness,  existing methods present a variety of fairness-related objective functions based on the group fairness criterion. However, these works typically assume that the sensitive attributes are discrete and do not work for continuous sensitive variables, such as the proportion of the female population in an area. Besides, the potential of the representations learned from clustering tasks to improve performance on other tasks is ignored by existing works. In light of these limitations, we propose a flexible deep fair clustering method that can handle discrete and continuous sensitive attributes simultaneously. Specifically, we design an information bottleneck style objective function to learn fair and clustering-friendly representations. Furthermore, we explore for the first time the transferability of the extracted representations to other downstream tasks. Unlike existing works, we impose fairness at the representation level, which could guarantee fairness for the transferred task regardless of clustering results. To verify the effectiveness of the proposed method, we perform extensive experiments on datasets with discrete and continuous sensitive attributes, demonstrating the advantage of our method in comparison with state-of-the-art methods.

\end{abstract}

\section{Introduction}
\label{sec:introduction}
Clustering is an unsupervised task that groups samples with common attributes and separates dissimilar samples.  It has been extensively used in numerous fields, e.g., image processing  \cite{lei2018superpixel}, remote sensing \cite{xie2018unsupervised}, and bioinformatics \cite{kiselev2019challenges}. Recently, many concerns have arisen regarding fairness when performing clustering algorithms. For example, in image clustering, the final results may be biased by attributes such as gender and race due to the imbalanced distribution of these sensitive attributes, even though the algorithm itself does not take these factors into account  \cite{chouldechova2018frontiers}. Unfair clustering could lead to discriminatory or even undesirable outcomes \cite{Zeng_2023_CVPR}, which brings a growing need for fair clustering methods that are unbiased by sensitive attributes.

Fair clustering aims to remove sensitive attributes from clustering results while preserving utility as much as possible. To achieve this goal, many studies attempt to incorporate fairness into traditional clustering algorithms. For example, pre-processing methods\cite{chierichetti2017fair,backurs2019scalable}  pack data points into fair subsets before conducting clustering, in-processing methods \cite{kleindessner2019guarantees, ziko2021variational} regard fairness as a constraint of traditional clustering optimization problems, and post-processing method \cite{bera2019fair} transforms the given clustering into a fair one by linear programming after executing traditional algorithms. Another line of works \cite{wang2019towards, zhang2021deep, li2020deep, Zeng_2023_CVPR} incorporates fairness into deep clustering methods. Specifically, these methods conduct clustering on the representations extracted from neural networks instead of raw data. To obtain fair clustering results, they carefully design clustering objects to ensure that sensitive subgroups are proportionally distributed across all clusters.

Although existing works have achieved impressive results, they still have some issues to address. First, these methods typically assume that sensitive attributes are discrete, such as gender and race. Continuous sensitive attributes are completely ignored, which, however, are common in the real world. To illustrate, let us take a real-world example here. We use the US Census dataset \cite{grari2019fairness}, which is an extraction of the 2015 American Community Survey, to group the tracts in America using $k$means. We consider a continuous variable \textemdash the proportion of the female population in a census tract\textemdash  as the sensitive attribute. As depicted in Fig \ref{fig-illustration}, we observe that the distributions of the sensitive variable in two different clusters exhibit different characteristics, indicating that the clustering results are biased by the sensitive attribute. Therefore, it is necessary to remove continuous sensitive information from clustering results just like removing discrete ones  \cite{wang2019towards, zhang2021deep, li2020deep, Zeng_2023_CVPR}. A naive method to adapt existing methods to continuous variables is to discretize the variables into ``categorical bins''. However, discretization could present threshold effects \cite{mary2019fairness}, and the thresholds may have no real sense.  
 \begin{figure}[t] 
    \centering
       \includegraphics[width=0.8\linewidth]{./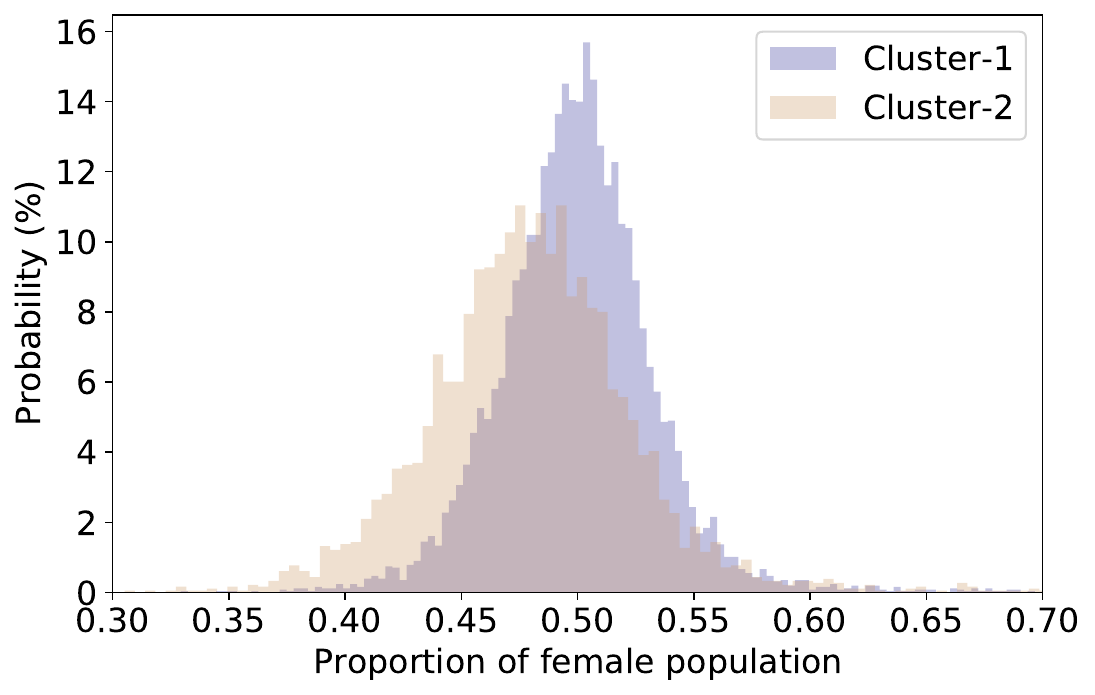}
    	\caption{We group the Census dataset into two clusters without considering fairness. The figure depicts the distributions of the proportion of the female population in the two clusters.    
}
    	\label{fig-illustration}
\end{figure}

On the other hand, in addition to clustering results, deep methods will produce a by-product, an encoder extracting representations from raw data.  A natural question is, \textit{can the encoder be used to learn representations for other downstream tasks?} A potential example is few-shot classification, where the sample size is too small to learn a strong classifier. The encoder trained for the clustering task is expected to generate representations with compact and separated cluster structures, which could improve classification performance. However, existing deep fair clustering methods \cite{wang2019towards, zhang2021deep, li2020deep, Zeng_2023_CVPR} ignore the transferability of the learned representations. Moreover, if the representations learned by existing models are transferred to other tasks, the fairness of their results cannot be guaranteed. The reason is that these methods obtain fair clustering by minimizing statistical dependence between clustering outputs and the sensitive attributes. However, imposing fairness on clustering results may not necessarily generate fair representations for unseen tasks. Therefore, we need fair representations that admit fair results on unseen tasks, even if the downstream tasks are not explicitly specified to be fair \cite{madras2018learning}.

To address the above issues, we propose a deep fair clustering method that can handle both discrete and continuous sensitive attributes. The main contributions of this study are summarized as follows.

\begin{itemize}
    \item We study fair clustering by considering both discrete and continuous sensitive attributes. To this end, we design an information bottleneck style clustering objective function to recast deep fair clustering as a mutual information optimization problem, where discrete and continuous attributes can be flexibly and uniformly handled.

    \item We investigate for the first time the potential of representations learned from deep clustering tasks to improve performance on other tasks. Unlike existing methods, we formulate the fairness of clustering by minimizing mutual information between sensitive attributes and latent representations. We theoretically demonstrate that this formulation can guarantee fairness when the representations are transferred to unseen tasks.

    \item We conduct extensive experiments on datasets with discrete and continuous properties to demonstrate the superiority of our approach over state-of-the-art methods. Furthermore, we also experimentally validate the transferability of the representations learned from deep fair clustering to few-shot fair classification.

\end{itemize}

\section{Related Work}
\label{sec:relatedwork}
\subsection{Fair Clustering}
In the literature, there have been abundant works \cite{ahmadian2019clustering, brubach2020pairwise, chen2019proportionally, davidson2020making,mahabadi2020individual} on fair clustering, which can be roughly divided into three categories, i.e., pre-processing, in-processing, and post-processing methods. Typically, pre-processing methods impose fairness constraints on the samples before performing traditional clustering methods. For example, \cite{chierichetti2017fair,backurs2019scalable} divides samples into several subsets termed Fairlets with fairness constraints before conducting clustering algorithms like $k$median. In-processing methods integrate fairness constraints into the clustering objective. One example is \cite{kleindessner2019guarantees}, which performs spectral clustering with a linear group fairness constraint. Besides, \cite{ziko2021variational} recasts the fairness constraint as a Kullback-Leibler (KL) term integrated into classic clustering methods. Finally, if we have obtained clustering results using traditional methods, post-processing methods like \cite{bera2019fair} can be employed to convert them into a fair one.

Different from traditional shallow models, deep clustering is an emerging method that utilizes powerful neural networks to improve clustering performance \cite{ghasedi2017deep,guo2017deep,
li2021contrastive,vincent2010stacked, yang2019deep}. Deep fair clustering incorporates fairness into deep clustering by carefully designing fairness-related objective functions. For example, \cite{wang2019towards} proposes a method that can handle multi-state variables, \cite{zhang2021deep} uses pseudo cluster assignments to facilitate model optimization, and \cite{li2020deep} employs an adversarial training framework to achieve fairness. Based on \cite{li2020deep}, \cite{chhabra2022robust} considers fair attacks for clustering, which is inconsistent with the purpose of this study. The most recent work \cite{Zeng_2023_CVPR} proposes an interpretable model via maximizing and minimizing mutual information. Our method differs from existing works in two ways. First, in addition to discrete sensitive attributes, our approach also considers the unexplored case where sensitive attributes are continuous. Several works \cite{mary2019fairness, grari2019fairness, lee2022maximal,chen2022scalable} measure fairness w.r.t. continuous sensitive variables using various statistical tools such as  R\'{e}nyi maximal correlation, HGR maximal correlation, and slice mutual information. Instead, our method employs mutual information to estimate fairness. Besides, these methods only focus on supervised tasks, while our method focuses on unsupervised clustering tasks. Second, compared with current deep methods  \cite{wang2019towards, zhang2021deep, li2020deep, Zeng_2023_CVPR}, our method can produce fair and transferable representations.

\subsection{Fair Representation Learning}
Fair representation learning (FRL) aims to learn latent representations that are unbiased by sensitive attributes \cite{zemel2013learning}. Several FRL methods learn representations for specific downstream tasks, such as classification with labels \cite{zhu2021learning, shui2022fair,gordaliza2019obtaining,8438994}. On the other hand, many works focus on learning fair representations without relying on labels from downstream tasks \cite{mehrabi2021survey}. These works achieve fairness by reducing mutual information between sensitive attributes and representations \cite{gupta2021controllable, song2019learning, xie2017controllable, madras2018learning, mary2019fairness}. The representation learned by our method is a by-product of a specific unsupervised downstream task, deep fair clustering, which has not been explored before in FRL.  We show that the additional clustering objectives increase the utility and transferability of the representations, especially for classification tasks.

\section{Problem Formulation and Motivations}
\label{sec:motivation}
\subsection{Problem Formulation}
Given a set of $N$ data points $\{X_1,...,X_N\}$, each point $X_i$ has a sensitive attribute $G_i$ whose values could be discrete, e.g., gender and race, or continuous, e.g., the proportion of a certain gender in the population. Fair clustering aims to find an assignment function $\kappa: X\to C\in [K]$ \footnote{$[K] = \{1,...,K\}$} that groups data points into clusters $C$ while simultaneously guaranteeing that the clustering is unbiased by the sensitive variable $G$. To achieve this goal, deep methods employ neural networks to extract latent representations $Z$ from raw data and perform clustering on the representations in an end-to-end manner.

\subsection{Unifying Existing Methods Via Mutual Information}
The key to deep fair clustering is how to incorporate fairness into the clustering objective function. To this end, existing methods design various fairness-related objective functions using different fairness criteria, which are listed in Table \ref{table-fairness-metric}. We will show that all these criteria can be interpreted uniformly through mutual information.

\begin{table}[htbp]
\renewcommand{\arraystretch}{1.5}
	\centering
  
	\begin{threeparttable}
        \tabcolsep = 0.2em
	\caption{Fairness criteria of existing deep methods }
	{\footnotesize
	\begin{tabular}{c|c|c}
	\Xhline{1.05pt}
	
 Index  &  Methods & Fairness criteria  \\
     
    \hline

    1 & \cite{Zeng_2023_CVPR}
     &$    p(C =k, G =t) = p(C =k) p(G  =t), \; \forall t\in[T], \forall k\in[K]$ \\  	
   2 &\cite{li2020deep}
     & $ \mathbb{E}_{p(X)}[ G | C=k] = \mathbb{E}_{p(X)} [G],\; \forall k\in[K]$ \\  		
  3  & \cite{zhang2021deep}
     & $ p(G=t|C=k) =  p(G=t) ,\; \forall t \in[T]$\\  		
   4   & \cite{wang2019towards}
     & $ p(G=t_1|C=k) =  p(G=t_2|C=k),\; \forall t_1, t_2\in[T]$\\  

	\Xhline{1.2pt}

	\end{tabular} 
	}
 \label{table-fairness-metric}
 \end{threeparttable}
\end{table}

\begin{proposition}
The fairness criteria in Table \ref{table-fairness-metric}  are equivalent to finding a clustering that satisfies $I(C;G) = 0$.

\label{prop-1}
\end{proposition}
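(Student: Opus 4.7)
The plan is to anchor the proof on the classical equivalence $I(C;G)=0 \iff C \perp G \iff p(C=k,G=t)=p(C=k)p(G=t)$ for all $k,t$, and then reduce each row of Table \ref{table-fairness-metric} to this product form. Criterion~1 is exactly the product form, so its equivalence with $I(C;G)=0$ holds by definition of statistical independence, and no further work is needed there.

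For criterion~3, I would multiply both sides of $p(G=t|C=k)=p(G=t)$ by $p(C=k)$ (for any $k$ with $p(C=k)>0$, since empty clusters contribute nothing to $I(C;G)$) to recover the joint factorization of criterion~1; the converse direction follows immediately from Bayes' rule. For criterion~4, I would first observe that a probability vector on $T$ values whose components are all equal must be $(1/T,\ldots,1/T)$, so $p(G=t|C=k)=1/T$ for every $t,k$; the law of total probability then yields $p(G=t)=\sum_k p(G=t|C=k)p(C=k)=1/T$, so $p(G=t|C=k)=p(G=t)$ and criterion~4 collapses onto criterion~3.

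Criterion~2 requires a brief notational clarification: reading $G$ as the one-hot indicator vector $(\mathbf{1}[G=1],\ldots,\mathbf{1}[G=T])$, the identity $\mathbb{E}_{p(X)}[G|C=k]=\mathbb{E}_{p(X)}[G]$ becomes the componentwise statement $p(G=t|C=k)=p(G=t)$, which is again criterion~3. The main obstacle is precisely this last step: a literal reading of criterion~2 as equality of scalar means would only match first moments and would be strictly weaker than independence. I would resolve this by pointing out that the adversarial objective used in \cite{li2020deep} is cast on indicator/one-hot features (so the vector interpretation is the intended one), under which the equivalence is tight. Chaining the four reductions in both directions then establishes the proposition.
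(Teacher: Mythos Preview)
Your approach mirrors the paper's: reduce each criterion to the factorization $p(C=k,G=t)=p(C=k)\,p(G=t)$, handle criterion~4 by the $1/T$ argument plus the law of total probability, and read criterion~2 as an indicator/one-hot statement (the paper simply lumps criteria~2--3 together without your explicit justification). One caveat on your final sentence: the paper's own proof only establishes the forward direction (criterion $\Rightarrow I(C;G)=0$), and full equivalence in fact fails for criterion~4 unless $G$ is marginally uniform, since $I(C;G)=0$ gives $p(G=t\mid C=k)=p(G=t)$ but not $p(G=t_1\mid C=k)=p(G=t_2\mid C=k)$---so your ``both directions'' claim is a slight overreach there, though no worse than the paper's own use of ``equivalent.''
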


\begin{proof}
First, if  criterion 1 in Table \ref{table-fairness-metric} is satisfied, it is not difficult to check that $\frac{ p(C=k, G=t )}{p(C=k) p(G =t )} = 1,\forall t\in[T], k\in[K]$, leading to $I(G;C) = 0$ according to the definition of mutual information. Second, the fairness criteria of 2-3 all require that the distribution of $G$ in each cluster is close to the whole dataset. This indicates that $\frac{ p( G=t|C = k )}{p(G=t)} = \frac{ p(C=k, G=t )}{p(C=k) p(G=t  )} = 1, \forall t\in[T],k\in[K]$, leading to $I(G;C) = 0$. Third, if metric 4 is satisfied, we have $p(G = t|C = k) = \frac{1}{T}$. On the other hand, $p(G = t) = \sum_{k=1}^K p(G = t,C = k) = \sum_{k=1}^K p(G = t|C = k)p(C = k) = \frac{1}{T}\sum_{k=1}^K p(C = k) = \frac{1}{T}$. Naturally, we have $p(G = t|C = k)= p(G = t)$, meaning that $I(G;C) = 0$.  
In summary, all fairness criteria in Table \ref{table-fairness-metric} indicate $I(C;G) = 0$.
\end{proof}

\subsection{Limitations and Motivations}
As observed from Table \ref{table-fairness-metric} and Proposition \ref{prop-1},  existing deep fair clustering methods have two main characteristics. (\romannumeral1) The sensitive attributes are assumed to be discrete, i.e., $G \in[T]$. (\romannumeral2) The fairness is achieved by directly minimizing statistical independence between sensitive attributes $G$ and the final clustering $C$, i.e., $I(C;G) = 0$. However, in the real world, we usually encounter scenarios where sensitive attributes are continuous, and existing works cannot handle such scenarios. Furthermore, the representations learned from clustering tasks could facilitate other tasks, such as classification. These works completely ignore the potential transferability of the representations learned from clustering tasks. Besides, minimizing $I(C;G)$ does not necessarily lead to representations that are statistically independent of sensitive attributes. When the representations are used for other downstream tasks, fairness may be no longer guaranteed.

The two observations motivate us to explore a unified deep fair clustering method that can (\romannumeral1) handle both \textbf{discrete} and \textbf{continuous} sensitive attributes, and (\romannumeral2) produce \textbf{fair} and \textbf{transferable} representations for other tasks.

\section{The Proposed Deep Fair Clustering Method}
\label{sec:formulation}
In this section, we reformulate the fair clustering problem by considering both discrete and continuous sensitive attributes. Then, we propose a deep fair clustering method via the lens of FRL, where fair and clustering-friendly representations for downstream tasks can be obtained from a unified framework.

\subsection{Fair Clustering Reformulation}
As concluded in Proposition \ref{prop-1}, existing deep fair clustering methods aim to seek clustering that does not share mutual information with discrete sensitive attributes. Inspired by the proposition, it is natural to generalize existing fairness criteria in clustering tasks to continuous sensitive attributes from the perspective of mutual information as follows.
\begin{definition}
For $C\in [K]$ and  $G$ being discrete or continuous, the clustering is absolutely fair if $I(C;G) =0$.
   \label{fairness-unified} 
\end{definition}
This definition of fairness is also known as the $\mathrm{Independence}$ criterion in fair classification tasks \cite{barocas2023fairness, mehrabi2021survey}. Here, we use it to uniformly define the fairness of clustering results w.r.t. both discrete and continuous sensitive attributes.

However, in this study, we minimize the mutual information $I(Z;G)$ instead of directly minimizing $I(C;G)$ to obtain fair clustering. According to the data processing inequality, $I(C;G)\leq I(Z;G)$. Thus, $ I(Z;G)$ is an upper bound of $I(C;G)$, and minimizing $I(Z;G)$ leads to decreasing  $I(C;G)$, which produces fair clustering results. The merit of minimizing $I(Z;G)$ instead of $I(C;G)$ is that fairness can be guaranteed when the representations are transferred to other downstream tasks. To illustrate, let us assume that the downstream task is classification $\lambda : Z \to \widehat{Y}$, where $\widehat{Y}$ is the prediction of $\lambda$. Furthermore, we follow \cite{gupta2021controllable} and assume $G$ is discrete since the fairness of classification results w.r.t. discrete sensitive attributes can be explicitly measured via demographic parity (DP) \cite{dwork2012fairness}, i.e., $\Delta_{DP}(\lambda, G) = \left|p( \widehat{Y} = 1 |G = 1) - p( \widehat{Y} = 1 |G = 0) \right|$. Note that DP is built on binary classification with binary-sensitive attributes. In this study, we consider a more general metric called generalized population parity (GDP), where categorical labels and sensitive values are not limited to binary values.
\begin{definition}
For $L,T\geq 2$, GDP is defined as 
\begin{align}
   &\Delta_{GDP}(\lambda, G) \notag\\
   = &\max_{l, t\neq t^{\prime}}\left|p( \widehat{Y} = l |G = t) - p( \widehat{Y} = l |G = t^{\prime}) \right|, l\in [L], t, t^{\prime}\in [T]
\end{align}
    \label{definition-SP}
\end{definition}
By definition, if the classification results are completely insensitive to $G$, we have $\Delta_{GDP}(\lambda, G) = 0$. Then, we can obtain the following proposition.
\begin{proposition}
 For any $\lambda$ acts on $Z$ to produce predictions $\widehat{Y}$, we have 
    \begin{align}
        h(\eta \Delta_{GDP} (\lambda, G)) \leq I(Z;G),
   \label{eq-formulation-6}
\end{align}
where $h(\cdot)$  is a strictly increasing non-negative convex function, and $\eta = \min_{t\in[T]} p(G = t)$.
    \label{proposition-SP}
\end{proposition}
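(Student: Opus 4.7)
The plan is to transport the output-level disparity $\Delta_{GDP}$ back to the representation-level mutual information in four steps: reduce the point-wise gap to a total-variation distance, apply a triangle inequality anchored at the marginal $p(\widehat{Y})$, convert TV to KL via Pinsker's inequality, and finally invoke the data processing inequality to pass from $\widehat{Y}$ to $Z$. The bookkeeping of $\eta$ is what pushes the outer square inside the term $\eta\Delta_{GDP}$.

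First I would fix a triple $(l^{*},t^{*},t'^{*})$ attaining the maximum in Definition \ref{definition-SP}. Since $\{\widehat{Y}=l^{*}\}$ is a measurable event, the gap on this event is at most $\mathrm{TV}(p(\widehat{Y}|G=t^{*}),p(\widehat{Y}|G=t'^{*}))$. Inserting the marginal $p(\widehat{Y})$ and applying the triangle inequality for TV, each resulting summand $\mathrm{TV}(p(\widehat{Y}|G=t),p(\widehat{Y}))$ is bounded by $\sqrt{\tfrac{1}{2}\mathrm{KL}(p(\widehat{Y}|G=t)\,\|\,p(\widehat{Y}))}$ via Pinsker's inequality. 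Using the identity
\[
I(\widehat{Y};G)=\sum_{t} p(G=t)\,\mathrm{KL}\!\left(p(\widehat{Y}|G=t)\,\|\,p(\widehat{Y})\right)
\]
together with $\eta\leq p(G=t)$ for every $t$, the individual KL term is bounded by $I(\widehat{Y};G)/\eta$. Chaining the inequalities yields $\Delta_{GDP}\leq \sqrt{2\,I(\widehat{Y};G)/\eta}$.

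To close, observe that $\widehat{Y}=\lambda(Z)$ forms a Markov chain $G\to Z\to\widehat{Y}$, so the data processing inequality gives $I(\widehat{Y};G)\leq I(Z;G)$ and hence $\tfrac{\eta}{2}\Delta_{GDP}^{2}\leq I(Z;G)$. Since $\eta\in(0,1]$ implies $\eta^{2}\leq\eta$, we further have $\tfrac{1}{2}(\eta\Delta_{GDP})^{2}\leq \tfrac{\eta}{2}\Delta_{GDP}^{2}\leq I(Z;G)$, so the choice $h(x)=x^{2}/2$, which is strictly increasing, non-negative, and convex on $[0,\infty)$, delivers the stated inequality.

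The main obstacle is cosmetic rather than conceptual: arranging $\eta$ to sit \emph{inside} $h$ while keeping $h$ itself $\eta$-free. The step $\eta^{2}\leq\eta$ above is slightly wasteful, and a tighter derivation might apply Pinsker to the weighted measures $p(G=t)\,p(\widehat{Y}|G=t)$ so that $\eta$ enters naturally; either route produces a valid $h$, and any quadratic (or higher-order) convex $h$ will do, reflecting that Pinsker is essentially the only elementary TV-to-KL conversion in play.
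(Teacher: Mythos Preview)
Your argument is correct, but the route differs from the paper's. The paper never passes through $I(\widehat{Y};G)$ or the data processing inequality. Instead it bounds $\Delta_{GDP}$ directly by the total-variation distance between the \emph{representation}-level conditionals: writing $p(\widehat{Y}=l\mid G=t)=\int p(\widehat{Y}=l\mid Z)\,p(Z\mid G=t)\,dZ$ and using $p(\widehat{Y}=l\mid Z)\le 1$ yields $\Delta_{GDP}\le \lVert p(Z\mid G=t^{*})-p(Z\mid G=t'^{*})\rVert_{v}$. It then invokes a lemma from Gupta et al.\ stating $I(Z;G)\ge h\bigl(\eta\max_{t,t'}\lVert p(Z\mid G=t)-p(Z\mid G=t')\rVert_{v}\bigr)$ for the explicit function $h(v)=\max\bigl(\log\tfrac{2+v}{2-v}-\tfrac{2v}{2+v},\ \tfrac{v^{2}}{2}+\tfrac{v^{4}}{36}+\tfrac{v^{6}}{288}\bigr)$, and monotonicity of $h$ finishes the proof. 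So the paper's argument is a single inequality plus a cited lemma, and it produces a sharper $h$ than your $x^{2}/2$ (indeed $h(v)\ge v^{2}/2$ by the second branch of the max). Your approach, by contrast, is fully self-contained, avoids the external lemma, and makes the Markov structure $G\to Z\to\widehat{Y}$ explicit via DPI; the cost is the looser Pinsker-type $h$ and the slightly wasteful $\eta^{2}\le\eta$ step you already flagged. Both are valid proofs of the stated proposition since the statement only requires \emph{some} strictly increasing non-negative convex $h$.
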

\begin{proof}
    The proof is in supplementary materials.
\end{proof}
The proposition is a generalized version of Theorem 2 in \cite{gupta2021controllable}, which tells us that $I(Z;G)$ upper bounds $\Delta_{GDP} (\lambda, G)$ since $ h(\eta \Delta_{GDP} (\lambda, G))$ is a strictly increasing convex function in $\Delta_{GDP}(\lambda, G)$. That is, any classification algorithm $\lambda$ acting on $Z$ to make a decision $\widehat{Y}$ will have bounded $\Delta_{DP}(\lambda, G)$ if $I(Z;G)$ is bounded. Therefore, if we transfer the representations learned by minimizing $I(Z;G)$ to classification tasks, the fairness of the classification results can be guaranteed regardless of the clustering outputs. 

\begin{remark}
When $Z$ is continuous, the metrics for evaluating the fairness of classification results have not been thoroughly studied. A direct proxy is the mutual information $I(\widehat{Y}, G)$. Using this proxy, we have $I(\widehat{Y}, G) \leq I(Z;G)$ due to the data processing inequality. Thus, minimizing $I(Z;G)$ also leads to fair classification results for continuous sensitive attributes.
    \label{remark-prop-2}
\end{remark}

\begin{figure*}[t] 
    \centering
       \includegraphics[width=0.8\linewidth]{./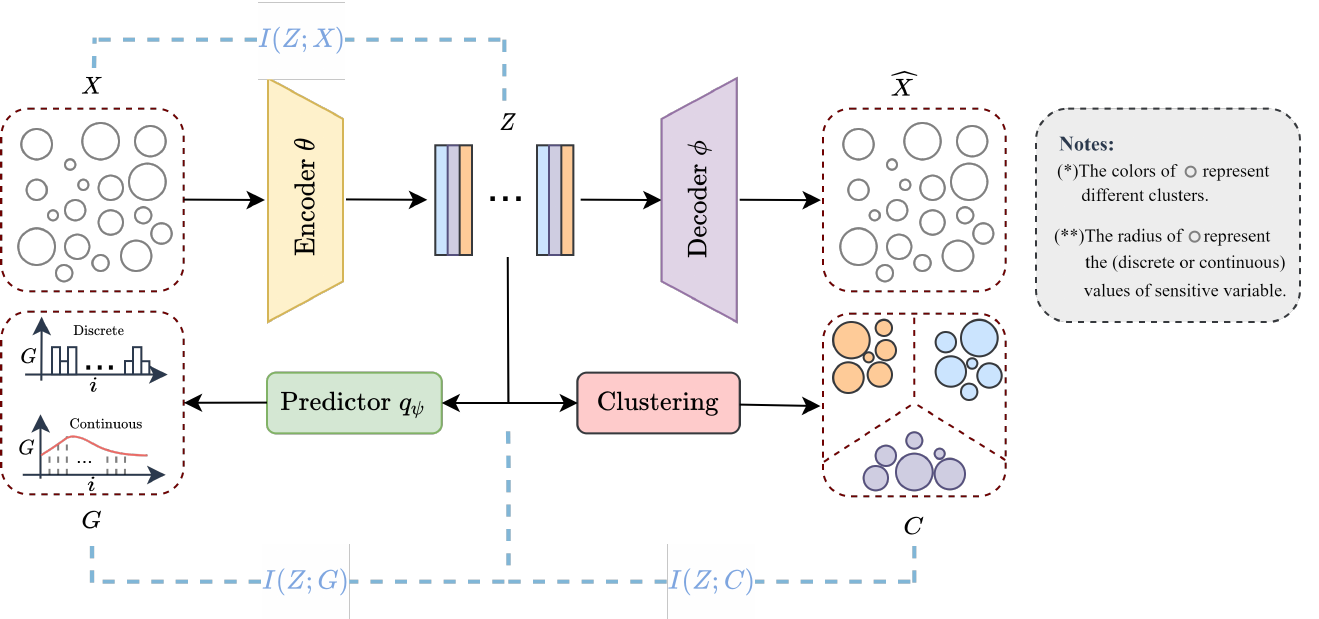}
    	\caption{The illustration of the proposed method.}
    	\label{fig-flowchart}
    	\vspace{-1em}
\end{figure*}

\subsection{Objective Functions}

\subsubsection{Framework overview}
We build our deep fair clustering framework based on the above fair clustering reformulation, as shown in Fig.\ref{fig-flowchart}. Consistent with existing deep methods, we leverage an auto-encoder framework that extracts latent representations $Z$ from raw data $X$. Then, clustering is performed on the representation space. To improve fair clustering performance, the representations are expected to be (\romannumeral1) clustering-favorable, (\romannumeral2) preserve useful information from original data $X$, and (\romannumeral3) be unbiased by the sensitive attributes $G$. Based on these requirements, we propose the following ``information bottleneck" style objective function at the presentation level
\begin{equation}
\max \;   I (Z;X) + \alpha I(Z;C) - \beta I(Z;G),
\label{objective function}
\end{equation}
where $\alpha$ and $\beta$ are trade-off parameters. Specifically, by maximizing mutual information $I(Z; C)$, we can obtain clustering-friendly representations. Besides, maximizing $I(Z; X)$ ensures the learned representations will not lose too much original data information. Finally, according to the above analysis, we remove sensitive information by minimizing $I(Z; G)$ instead of $I(Z; C)$ to produce transferable and fair representations. Next, we present the details of the three optimization objectives.

\subsubsection{$I(Z;C)$}
It is not difficult to obtain that 
\begin{align}
& I(Z;C)  \notag \\
=& H(C) - H(C|Z) \notag \\
=& -\mathbb{E}_{p(C)} \log p(C) + \mathbb{E}_{p(C,Z)} \log p(C|Z)\notag\\
= &{- \sum_{k=1}^{K} p(C=k)\log p(C=k)}\label{H1} \\
       &{+ \frac{1}{N}\sum_{i=1}^{N}\sum_{k=1}^{K} p(C=k|Z = Z_i)\log p(C=k|Z = Z_i)}\label{H2}\\
        := &-\mathcal{L}_c \notag,
\end{align}
where $Z_i = f_{\theta}(X_i)$, $p(C=k) = \frac{1}{N}\sum_{i=1}^N p(C=k|Z = Z_i)$, and $p(C=k|Z = Z_i)$ is the assignment probability of the $i$-th representation being grouped into the $k$-th cluster. In the literature, the assignment probability can be calculated in many ways, such as the Student t-distribution \cite{li2020deep}. In this study, we employ the vanilla $k$-means to compute the probability \cite{Zeng_2023_CVPR}
\begin{align}
p(C=k|Z = Z_i) = \frac{\exp(\cos(Z_i, U_k))/\tau}{\sum_{j=1}^{K} \exp(\cos(Z_i, U_j))/\tau}
\label{equ-formulation-prob}
\end{align}
where $U_k, k=1,...,K$, are the cluster centers, and $\tau = 0.1$  is the temperature to control the softness. Intuitively, maximizing \eqref{H1} punishes too large or too small clusters to avoid trivial solutions where all samples are assigned to the same clusters. Moreover, the maximization of \eqref{H2} encourages each representation to move toward its nearest cluster center and away from others, bringing compact clustering. In summary, maximizing $I(Z;C)$ can lead to clustering-favorable representations.

\subsubsection{$I(Z;X)$}
Since clustering is an unsupervised task, we maximize $I(Z;X)$ as a  self-supervised objective to learn informative representations. Due to the data processing inequality, we have $I(Z;X) \geq I(\widehat{X};X)$, where $\widehat{X} = g_\phi(Z)$ is the reconstructed data using $Z$ via a decoder $g_\phi(\cdot)$, and $\phi$ is the parameters of the decoder. We can maximize $I(\widehat{X};X)$ instead of $I(Z;X)$ since $I(\widehat{X};X)$ is the lower bound of $I(Z;X)$. Similarly, by the definition of mutual information, we have 
\begin{align}
I(\widehat{X};X) =  H(X) - H(X|\widehat{X}), 
\end{align}
where $ H(X)$ is a constant because the data are given in prior, and $H(X|\widehat{X})$ is calculated by 
\begin{align}
H(X|\widehat{X}) = - \mathbb{E}_{p(\widehat{X}, X)} \left[\log p(X|\widehat{X})  \right] = -\frac{1}{N}\sum_{i=1}^N  \log p(X_i|\widehat{X}_i).
\label{equ-formulation-1}
\end{align}
Here, we assume that $p(X_i|\widehat{X}_i)$ follows $\mathcal{N}(\widehat{X}_i, \widetilde{\sigma}^2I)$, where $I$ is an identity matrix. By removing constants, we obtain that maximizing $I(\widehat{X};X)$ is equivalent to minimizing 
\begin{align}
\mathcal{L}_r = \frac{1}{N} \sum_{i=1}^N \left \lVert \widehat{X}_i -  X_i\right\rVert_2^2.
\label{equ-formulation-2}
\end{align}

\begin{remark}
Note that minimizing $\mathcal{L}_c$ and $\mathcal{L}_r$ is similar to that in \cite{Zeng_2023_CVPR}, which can be also found in some deep auto-encoder clustering methods \cite{ren2022deep}. However, our method differs from the existing methods in that we derive the objective functions at the representation level, while others are at the sample level.
    \label{remark}
\end{remark}

\subsubsection{$I(Z;G)$}
It is difficult to calculate $I(Z;G)$ exactly since $Z$ is continuous. Here, we construct an upper bound of $I(Z;G)$  and minimize the upper bound instead. 
\begin{proposition}
     For mutual information $I(Z;G)$, we have 
\begin{align}
    I(Z;G) \leq \mathbb{E}_{p(Z,G)}\log p(G|Z)-\mathbb{E}_{p(Z)p(G)}\log p(G|Z),
    \label{equ-formulation-2-1}
\end{align}
    \label{proposition-upper-bound}
\end{proposition}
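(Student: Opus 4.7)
My plan is to recognize this inequality as the Contrastive Log-ratio Upper Bound (CLUB) on mutual information, and to derive it by combining the definition of $I(Z;G)$ with Gibbs' inequality (equivalently, the non-negativity of KL divergence).

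First I would rewrite the mutual information in the form
\begin{align}
I(Z;G) = \mathbb{E}_{p(Z,G)}\log\frac{p(G|Z)}{p(G)} = \mathbb{E}_{p(Z,G)}\log p(G|Z) - \mathbb{E}_{p(G)}\log p(G).
\end{align}
Comparing this with the right-hand side of \eqref{equ-formulation-2-1}, the desired inequality reduces to showing
\begin{align}
\mathbb{E}_{p(Z)p(G)}\log p(G|Z) \;\le\; \mathbb{E}_{p(G)}\log p(G).
\end{align}
Thus the whole proof boils down to establishing this one inequality.

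Next I would handle it pointwise in $Z$. For any fixed $z$, the quantity $\mathbb{E}_{p(G)}\log p(G) - \mathbb{E}_{p(G)}\log p(G|z)$ is exactly $\mathrm{KL}\bigl(p(G)\,\|\,p(G|z)\bigr)$, which is non-negative by Gibbs' inequality. Equivalently, $p(\cdot|z)$ is another probability distribution over $G$, so Jensen's inequality applied to $\log$ yields $\sum_g p(g)\log\frac{p(g|z)}{p(g)} \le \log\sum_g p(g|z) = 0$. Taking expectation over $p(Z)$ of this pointwise non-negative quantity preserves the sign, giving
\begin{align}
\mathbb{E}_{p(Z)}\bigl[\mathbb{E}_{p(G)}\log p(G) - \mathbb{E}_{p(G)}\log p(G|Z)\bigr] \;\ge\; 0,
\end{align}
which is exactly the inequality needed. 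Substituting back into the decomposition of $I(Z;G)$ produces \eqref{equ-formulation-2-1}.

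There is no real obstacle here beyond identifying the right manipulation: the result is essentially a single application of KL non-negativity once the mutual information is expanded and the $\log p(G)$ term is replaced by $\mathbb{E}_{p(Z)p(G)}\log p(G|Z)$ through Gibbs. The same argument works verbatim with sums replaced by integrals when $G$ is continuous, so the bound holds uniformly for the discrete and continuous sensitive-attribute regimes that the paper targets.
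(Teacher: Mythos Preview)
Your proposal is correct and matches the paper's approach: the paper does not give its own proof but simply identifies \eqref{equ-formulation-2-1} as the Contrastive Log-ratio Upper Bound (CLUB) of \cite{cheng2020club}, and your derivation is exactly the standard CLUB argument via $\mathrm{KL}\bigl(p(G)\,\|\,p(G|z)\bigr)\ge 0$.
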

The upper bound in \eqref{equ-formulation-2-1} is based on the  Contrastive Log-ratio Upper Bound (CLUB) in \cite{cheng2020club}. However, in our problem, $p(G|Z)$ is usually unknown. Thus, we use a neural network to learn a variational approximation $q_{\psi}(G|Z)$ for $p(G|Z)$, where $\psi$ is the parameters of the neural network. Specifically, the approximation is obtained by solving the following likelihood
\begin{align}
\max_{\psi}\; \mathbb{E}_{p(G,Z)}   \log q_{\psi}(G|Z).
    \label{equ-formulation-3}
\end{align}
We consider two cases where $G$ is continuous or discrete. When $G$ is \textbf{discrete}, we assume $q_{\psi}(\cdot)$ obeys categorical distribution, i.e., $q_{\psi}(G|Z) = \prod_{t=1}^T ([\xi(Z)]_t)^{\mathbb{I}(G = t)}$, where $\mathbb{I}(\cdot)$ is an indicator function. Besides, $[\xi(Z)]_t$ is the probability of $Z$ belonging to $t$-th sensitive category predicted by a classifier. Thus, solving \eqref{equ-formulation-3} is equivalent to minimizing the following cross-entropy loss
\begin{align}
\mathcal{L}_d = -\frac{1}{N}\sum_{i=1}^N \sum_{t=1}^{T} \mathbb{I}(G_i = t) \log[\xi(Z_i)]_t.
    \label{equ-formulation-3-1}
\end{align}
When $G$ is \textbf{continuous}, we assume that $q_{\psi}(\cdot)$ is the Gaussian distribution, i.e., $q_{\psi}(G|Z) = \mathcal{N}(\mu(Z), \sigma^2(Z)I )$, where $\mu(Z)$ and $ \sigma^2(Z)$ are inferred by neural networks. This assumption is common in variational methods \cite{doersch2016tutorial}. Thus, solving \eqref{equ-formulation-3} is equivalent to minimizing
\begin{align}
\mathcal{L}_q =  \frac{1}{N}\sum_{i=1}^N \log\sigma^2(Z_i) + \frac{(G_i - \mu(Z_i))^2}{\sigma^2(Z_i)}.
    \label{equ-formulation-3-2}
\end{align}

After obtaining $q_{\psi}(G|Z)$, with samples $\{(Z_i, G_i)\}_{i=1}^N$, we estimate the upper bound of $I(Z;G)$ via minimizing
\begin{align}
\mathcal{L}_s = \frac{1}{N}\sum_{i=1}^N \left( \log q_{\psi}(G_i|Z_i) -  \frac{1}{N} \sum_{j=1}^N \log q_{\psi}(G_j|Z_i)
\right).
    \label{equ-formulation-4}
\end{align}

\subsection{The Complete Procedure}
At the beginning of each epoch, we extract latent representations of the whole dataset via $Z_i = f_{\theta}(X_i)$ and compute cluster centers $\{U_k\}_{k=1}^K$ by applying $k$means on the extracted representations. Then, for each minibatch, we train the predictor $\psi$ by minimizing \eqref{equ-formulation-3-1} (discrete sensitive attributes) or \eqref{equ-formulation-3-2} (continuous sensitive attributes). After obtaining $q_{\psi}$, we fix $\psi$ and update encoder $\theta$ and decoder $\phi$ by minimizing the following objective function
\begin{align}
\mathcal{L} = \mathcal{L}_r + \alpha \mathcal{L}_c + \beta \mathcal{L}_s.
    \label{equ-formulation-5}
\end{align}
Repeating the above process until convergence, we finally obtain the encoder $f_{\theta}$ to generate fair representations that can be assigned to the clusters corresponding to the nearest cluster centers. Following \cite{Zeng_2023_CVPR}, we leverage a warm-up strategy to facilitate convergence, in which only $\mathcal{L}_c$ are minimized. The complete procedure is placed in Algorithm \ref{alg:1}.

At first glance, the entire procedure follows the paradigm of adversarial training, in which we learn a discriminator $q_{\psi}$ to predict sensitive attributes $G$ from $Z$, including discrete and continuous sensitive attributes. Note that \cite{li2020deep} also leverages the adversarial training technique to design fairness loss. Our method differs \cite{li2020deep} in three-fold. First, our fairness-adversarial loss is theoretically derived from minimizing $I(Z;G)$, while that of \cite{li2020deep} is designed heuristically. Second, our loss is at the representation level while that of \cite{li2020deep} is based on clustering results. Third, we consider both discrete and continuous cases while \cite{li2020deep} only considers categorical sensitive variables.



\begin{algorithm}[t] 
\caption{The complete procedure of the proposed method} 
\begin{algorithmic}[1] 
\REQUIRE  Samples $X_1,...,X_N$ with sensitive attributes $G_1,...,G_N$,  $\alpha$, $\beta$, $epoch_{\max}$ and $epoch_{\mathrm{warm}}$

\ENSURE  The fair clustering $C_1,...,C_N$ and representations $Z_1,...,Z_N$.

\FOR{$epoch<epoch_{\max}$  }

\STATE Extract latent representations via $Z_i = f_{\theta}(X_i), i = 1...,N$
\STATE Obtain the cluster centers $U$ and the clustering $C$ via $k$means on $Z$
\FOR{ each minibatch}
\IF{$epoch < epoch_{\mathrm{warm}}$}
   \STATE Perform warm-up steps
\ELSE
   \STATE Update predictor $\psi$ by minimizing  \eqref{equ-formulation-3-1}  or \eqref{equ-formulation-3-2} 
    \STATE Fix $\psi$ and compute objective function via \eqref{equ-formulation-5}
\ENDIF
\STATE Update encoder parameters $\theta$ and decoder parameters $\phi$ to minimize 
 \eqref{equ-formulation-5} via stochastic gradient descent (SGD).
\ENDFOR
\ENDFOR

\end{algorithmic}
\label{alg:1} 
\end{algorithm}

\section{Experiment}
\label{sec:experiment}

\begin{table*}[t]
\renewcommand{\arraystretch}{1.3}
	\centering
	\begin{threeparttable}
        \tabcolsep = 0.1em
	\caption{Clustering results on the datasets of discrete sensitive attributes.}
	{\footnotesize
	\begin{tabular}
 {c|ccccc|ccccc|ccccc|ccccc|ccccc}
	\Xhline{1.05pt}

    	 & \multicolumn{5}{c|}{MNIST-USPS} & \multicolumn{5}{c|}{Color Reverse MNIST}& \multicolumn{5}{c|}{HAR} & \multicolumn{5}{c|}{Offce-31}& \multicolumn{5}{c}{MTFL}\\

	\cline{2-26}

     & $\mathrm{ACC} $ & $\mathrm{NMI} $  & $\mathrm{Bal}$  & $\mathrm{MNCE}$ 
     & $F_m$ & $\mathrm{ACC} $ & $\mathrm{NMI} $  & $\mathrm{Bal}$  & $\mathrm{MNCE}$ 
     & $F_m$& $\mathrm{ACC} $ & $\mathrm{NMI} $  & $\mathrm{Bal}$  & $\mathrm{MNCE}$ 
     & $F_m$& $\mathrm{ACC} $ & $\mathrm{NMI} $  & $\mathrm{Bal}$  & $\mathrm{MNCE}$ 
     & $F_m$& $\mathrm{ACC} $ & $\mathrm{NMI} $  & $\mathrm{Bal}$  & $\mathrm{MNCE}$ 
     & $F_m$  \\
     
    \hline

    AE 
    &76.3 &71.8 &0.0 &0.0 &0.0
    &41.0 &52.8 &0.0 &0.0 &0.0
    &66.3 &60.7 &0.0 &86.9 &71.5
    &63.8 &66.8 &0.0 &0.0 &0.0
    &67.2 &16.0 &67.8 & 97.3 &27.5\\

    DEC  
    &60.0 &59.4 &0.0 &0.0 &0.0
    &40.7 &38.2 &0.0 &0.0 &0.0
    &57.1 &65.5 &0.0 &93.7 &77.1 
    &63.3 &68.6 &0.0 &0.0 &0.0
    &56.7 &0.6 &78.0 &98.9 &1.1\\

     DAC 
    &76.3 &69.9 &0.0 &0.0 &0.0
    &31.4 &27.1 &0.0 &0.0 &0.0
    &38.2 &31.5 &0.0 &32.4 &31.9 
    &14.0 &25.2 &0.0 &0.0 &0.0
    &58.9 &1.4 &81.5 &87.9 &2.7\\

     CIGAN 
    &38.3 &35.7 &0.1 &1.9 &3.6
    &20.1 &9.1 & 2.2 &14.9 & 11.3
    &52.7 & 44.3 & 0.4 &0.0 &0.0
    &52.2 & 54.9 &0.0 &0.0 &0.0
    &\underline{72.9} &12.6 &79.1 &99.0 &22.4\\

     ScFC 
    &14.2 & 1.3 &\textbf{11.2} &\textbf{95.0} &2.6
    &51.3 &49.1 &\textbf{100.0} &\textbf{100.0} &65.8
    &\textemdash &\textemdash &\textemdash &\textemdash &\textemdash 
    & 38.0 &60.7 &\textbf{26.7} &\textbf{97.7} &74.9
    &52.1 &15.1 &\textbf{100.0} &\textbf{100.0} &26.3\\

    SpFC 
    &20.1 &15.5 &0.0 &0.0 &0.0
    &11.0 & 2.1 &0.0 &0.0 &0.0
    &19.0 &0.4 &0.0 &0.0 &0.0 
    &9.3 & 11.4 &0.0 &0.0 &0.0
    & 65.5 &0.1 & 75.0 &98.5 &0.2\\

    VFC 
    & 58.1 & 55.2 &0.0 &0.0 &0.0
    & 38.1 & 42.7 &0.0 &0.0 &0.0
    & 62.6 & 66.2 & 25.6 & 98.7 & 79.3 
    & 65.2 &69.7 & 20.3 &86.0 &77.0
    &68.8 &8.4 &88.9 &99.8&  15.6\\

    FAlg 
    & 58.4 &53.8 &9.5 & 85.8 &66.1
    &26.9 & 14.3 &66.6 &97.1 &24.9
    & 56.6 &58.6 &\underline{43.2} &99.2 &73.7
    & 67.1 & 70.7 & 20.4& 86.4 & \underline{77.8}
    &63.2 &16.7 &60.1& 96.3 &28.5\\

    DFC 
    & 85.7 & 83.4 & 6.7 & 68.2 & 75.0
    &49.9 & 68.9 & 80.0 &99.1 & 81.3
    &\textemdash &\textemdash &\textemdash &\textemdash &\textemdash
    &\underline{69.0}  &\underline{70.9} &11.9 & 64.2 &67.4
    & 72.8 &17.6 &\underline{97.4} &\underline{99.9} &30.0\\

    FCMI
    &\underline{96.7} &\underline{91.8} &\underline{10.7} &\underline{94.5} &\textbf{92.0}
    & \underline{88.4} & \underline{86.4} &99.5 &\underline{99.9} &\underline{92.7}
    &\textbf{88.2} &\textbf{80.7} &40.7 &\underline{99.3} & \textbf{89.0}
    &\textbf{70.0} &\textbf{71.2} &\underline{22.6} &\underline{90.6} &\textbf{79.7}
    &70.2 &\underline{19.1} &90.4 &99.8 &\underline{32.0}\\

    Ours
    &\textbf{97.0} &\textbf{92.4} &10.5 &91.5 &\underline{91.7} 
    &\textbf{97.0} &\textbf{92.3} &\underline{99.7} &\textbf{100.0} &\textbf{96.0}
    &\underline{82.5} &\underline{75.8} &\textbf{45.2} &\textbf{99.5} &\underline{85.9} 
    &64.0 &67.6 &19.4 &84.1 &74.9
    &\textbf{85.8} &\textbf{27.7} &89.1 &\underline{99.9} &\textbf{43.4}\\

 \Xhline{1.05pt}
	\end{tabular} 
	}
		\begin{tablenotes}
			\footnotesize
			\item  The best and the second best results are marked in bold and underline, respectively. 
        \end{tablenotes}

          \label{table-discrete}
 \end{threeparttable}
     	\vspace{-1em}
\end{table*}

\begin{figure*}[t] 
    \centering
\subfloat[]{
    \includegraphics[width=0.49\linewidth]{./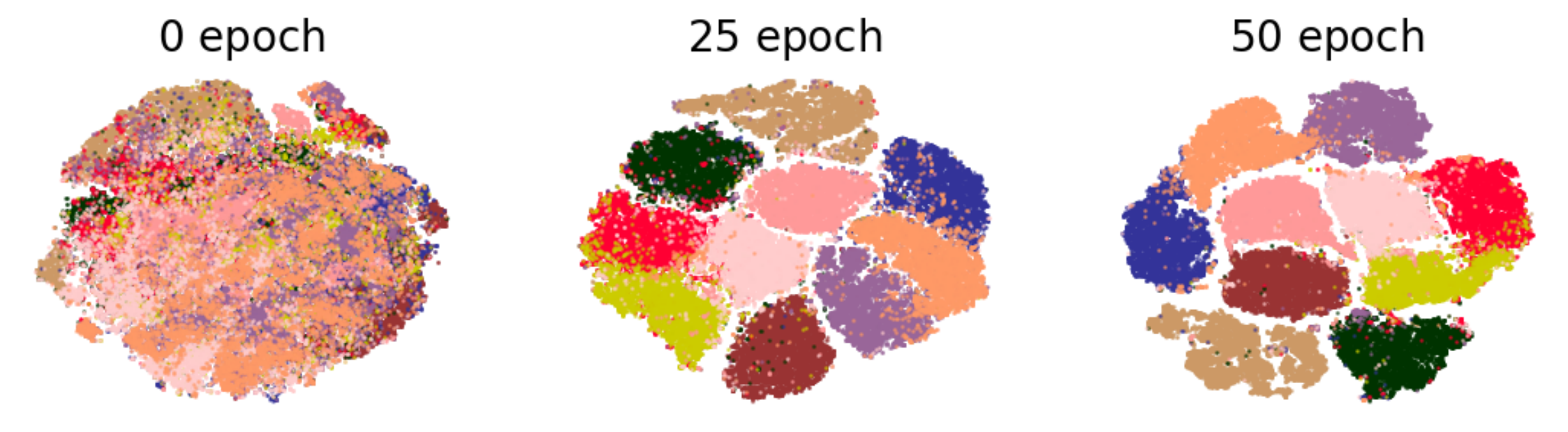}
}
\subfloat[]{
    \centering
    \includegraphics[width=0.49\linewidth]{./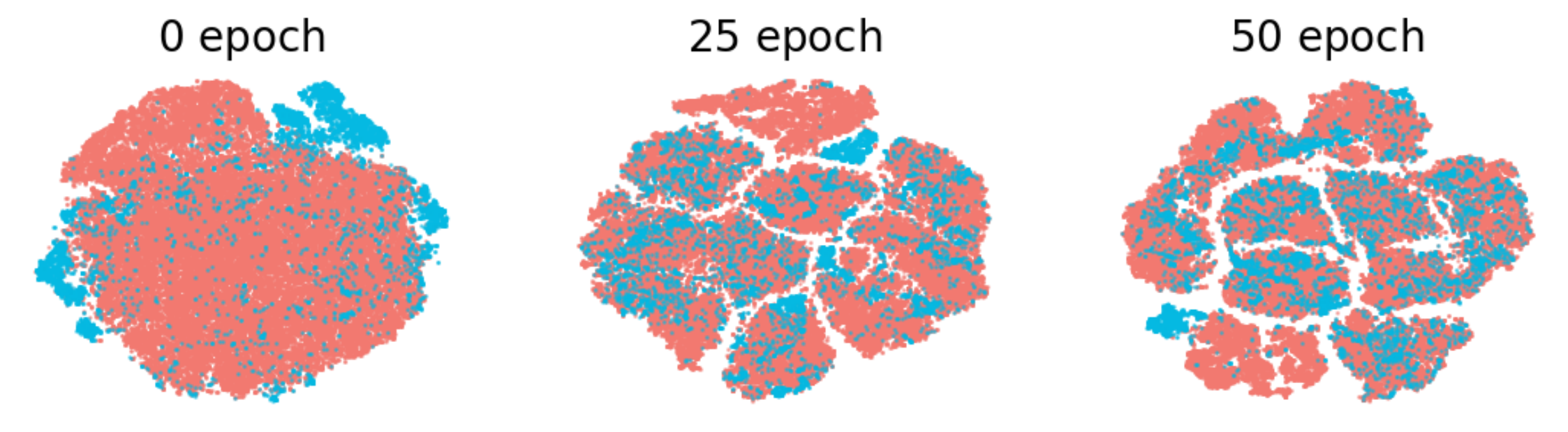}
}

    	\caption{The results of visualization of the representations on MNIST-USPS datasets. (a) The left three figures are colored by classes, and (b) the right three figures are colored by groups.
     }
    	\label{fig-visual}
\end{figure*}

In this section, we evaluate the proposed method by answering the following questions:
\begin{itemize}
\item[(Q1)] Compared to existing methods, can our method achieve competitive results on datasets with \textbf{discrete} sensitive attributes? 
\item[(Q2)] Can our model effectively obtain fair clustering on datasets with \textbf{continuous} sensitive attributes?
\item[(Q3)] How do the learned representations perform when transferred to other tasks?
\end{itemize}

\subsection{Experimental Setups}
\textbf{Datasets: } For Q1, we employ five commonly used datasets with discrete sensitive attributes in fair clustering tasks, i.e., MNIST-USPS \footnote{http://yann.lecun.com/exdb/mnist, https://www.kaggle.com/bistaumanga/usps-dataset}, ReverseMNIST, HAR \cite{anguita2013public}, Offce-31 \cite{mukherjee2019clustergan}, and MTFL \cite{zhang2014facial}. For Q2, the US Census  dataset \cite{grari2019fairness, mary2019fairness} and Communities and Crimes  (C\&C) dataset \cite{grari2019fairness} are employed. The US Census dataset is an extraction of the 2015 American community survey, where we use the proportion of the female population in a census tract as the sensitive attribute and whether the average income in this tract is above 50,000 USD as the cluster label. The Crime dataset includes 128 attributes for 1,994 instances from communities
in the US. We use the population proportion of a certain race as the sensitive attribute. Whether the number of violent crimes per population is greater than 0.15 is used as the cluster label. More details of the used datasets can be found in the appendix.

\textbf{Implementation Details: }
Following the previous works \cite{Zeng_2023_CVPR, li2020deep}, we utilize a convolutional auto-encoder for
MNIST-USPS and Reverse MNIST datasets, and a fully-connected auto-encoder for other datasets. For fair comparisons, we employ group-wise decoders as in \cite{Zeng_2023_CVPR} for datasets with discrete sensitive attributes. The ResNet50 \cite{he2016deep}  is used to extract features of MTFL and Office-31 as the inputs. To estimate mutual information of $I(Z; G)$, we employ a three-layer fully-connected neural network with the hidden layer size of $16$ as the predictor. For all experiments, we fix $\alpha = 0.04$ and $\beta = 0.18$. We train our model for 300 epochs using the Adam optimizer with an initial learning rate of $10^{-4}$ for all datasets. Consistent with \cite{Zeng_2023_CVPR}, the first  20 epochs are warm-up steps.  All experiments are conducted on an RTX 2080Ti GPU. The source code is available at https://github.com/kalman36912/DFC\_Dis\_Con/tree/master.

\textbf{Baselines: } For Q1, we compare our model with all baselines in \cite{Zeng_2023_CVPR}, which includes classic clustering methods and fair clustering methods. We use four  classic methods, i.e., auto-encoder+$k$means \cite{vincent2010stacked}, DEC \cite{xie2016unsupervised}, DAC \cite{chang2017deep} and ClGAN \cite{mukherjee2019clustergan}. For fair clustering methods, four shallow methods are employed, i.e., ScFC \cite{backurs2019scalable}, SpFC \cite{kleindessner2019guarantees}, VFC \cite{ziko2021variational} and  FAlg  \cite{bera2019fair}. In addition, two deep fair clustering methods are included, i.e.,  DFC \cite{li2020deep} and FCMI \cite{Zeng_2023_CVPR}. Consistent with \cite{Zeng_2023_CVPR}, we ignore DFDC \cite{zhang2021deep} and Towards \cite{wang2019towards} since their codes are not available. Furthermore, ScFC \cite{backurs2019scalable} and DFC  \cite{li2020deep} only support the case of two sensitive groups. Therefore, we omit the results of both methods on the HAR dataset, which has 30 sensitive groups. For Q2, since existing fair clustering methods do not work for continuous sensitive attributes, we binarize continuous variables into discrete variables to adapt to existing methods. We only employ deep clustering methods, DFC \cite{li2020deep} and FCMI \cite{Zeng_2023_CVPR}, as they can learn latent representations. For clarity, the baselines of Q3 will be presented in context.

\textbf{Evaluation Metrics: } For Q1, we utilize all evaluation metrics in \cite{Zeng_2023_CVPR}, i.e., $ \mathrm{ACC}, \mathrm{NMI}, \mathrm{ Bal}, \mathrm{MNCE}$ and $ F_{m}$. Specifically, $\mathrm{ACC}$ and $\mathrm{NMI}$ are used to evaluate the cluster validity. The metrics $\mathrm{ Bal}$ and $\mathrm{MNCE}$  are used to evaluate the fairness performance. Finally, $F_m$ is a harmonic mean of $\mathrm{NMI}$ and $\mathrm{MNCE}$ that considers both clustering and fairness performance. The detailed definitions of these metrics are placed in the appendix. For the above metrics, higher values represent better results. A natural metric for assessing the fairness of clustering results in the continuous case is $I(C;G)$, as suggested by Definition \ref{fairness-unified}. However, it is difficult to exactly estimate mutual information, especially for continuous variables. Thus, in this study, we follow \cite{chen2022scalable} and use R\'{e}nyi maximal correlation $\rho^*(\cdot)$ 
to estimate statistical dependence as a proxy of mutual information, the computation of which is presented in the appendix. The value range of  $\rho^*(\cdot)$ is $[0,1]$, and  $\rho^*(C,G)=0$ if and only if $C\perp G$, meaning that $C$ is insensitive to $G$. Thus, smaller $\rho^*(\cdot)$ means fairer results. For clarity, the results of all metrics are presented in the form of percentages.

\subsection{ANSW 1: Performance on Discrete Sensitive Attributes}
We first report quantitative comparisons between our method and ten baselines in Table \ref{table-discrete}. It is observed that the traditional clustering methods perform poorly on the fairness metrics since they do not consider fairness constraints. Note that although ScFc achieves the highest fairness metric across multiple datasets, it does so at the expense of clustering quality. In contrast, deep fair clustering methods can achieve improved fairness while maintaining high-quality clustering. Compared with two deep methods, DFC and FCMI, our method achieves superior or competitive performance on most datasets except Office-31. However, the results of our method on Office-31 still outperform traditional methods. 

Figure \ref{fig-visual} depicts the t-SNE visualization results \cite{van2008visualizing}  of the representations learned by our method. At the initial epoch, all representations are entangled with each other. As training goes on, the representations show compact clusters. Besides, the sensitive attributes are also evenly distributed across clusters, bringing fair clustering results. Figure \ref{param-sensitive} describes the parameter sensitivity of our method. It is observed that the best performance of our method is around $\alpha = 0.04$ and $\beta = 0.2$. Moreover, for different $\alpha$ and $\beta$, our method achieves stable performance, indicating its robustness.

\begin{figure}[t] 
    \centering
\subfloat[$\mathrm{ACC}$]{
    \includegraphics[width=0.49\linewidth]{./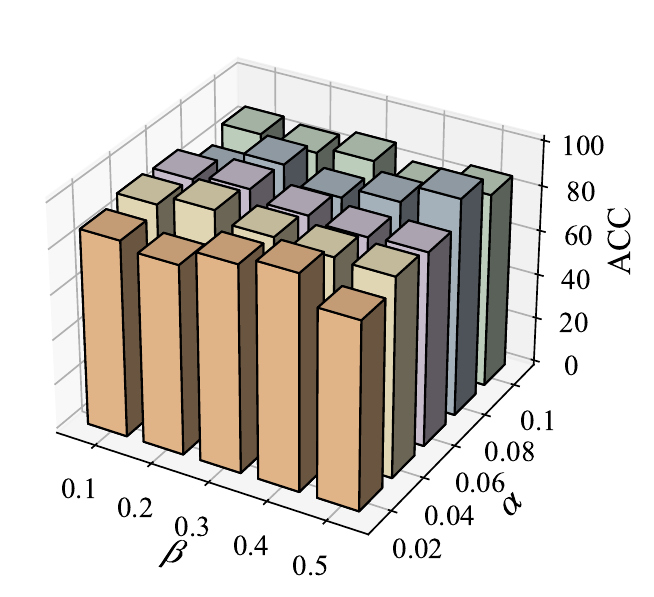}
}
\subfloat[$\mathrm{MNCE}$]{
    \centering
    \includegraphics[width=0.49\linewidth]{./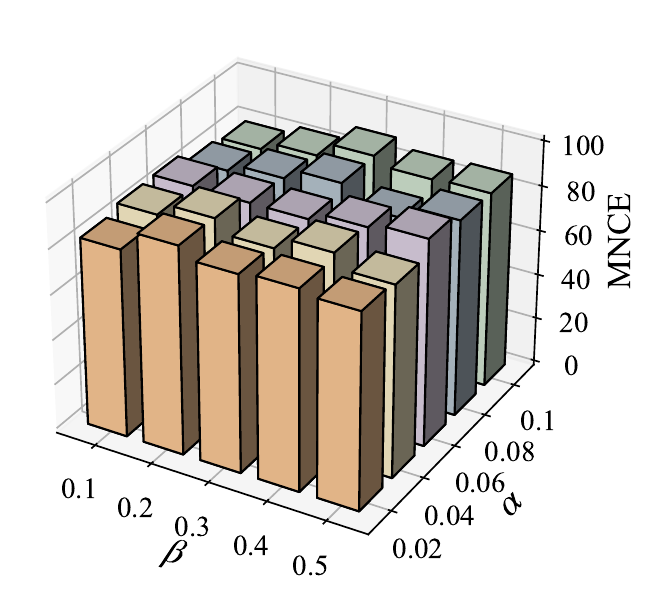}
}

    	\caption{The effect of $\alpha$ and $\beta$ on $\mathrm{ACC}$ and $\mathrm{MNCE}$ of MNIST-USPS dataset
     }
    	\label{param-sensitive}
\end{figure}

\subsection{ANSW 2: Performance on Continuous Sensitive Attributes}
For all datasets, we add sensitive variables into input features to better illustrate the ability of all methods to remove sensitive information. As displayed in Table \ref{table-continuous}, the values of $\rho^*(C,G)$ of baselines are higher than ours in both datasets, indicating that our method can obtain fairer clustering results. The reason may be that discretized sensitive variables cannot fully reflect sensitive information of continuous attributes. On the contrary, our method removes statistical dependence between $Z$ and $G$ directly without discretization, which also leads to decreasing dependence between $C$ and $G$.


To better illustrate the effect of each module, we conduct two ablation studies on the Census dataset, namely removing $\mathcal{L}_c$ (w/o $\mathcal{L}_c$) and $\mathcal{L}_s$ (w/o $\mathcal{L}_s$). As shown in Fig.\ref{fig-ablation}, when  $\mathcal{L}_c$ is removed, metrics related to clustering significantly decrease.  On the other hand, when $\mathcal{L}_s$ is removed, the statistical dependence, $\rho^*(Z,G)$ and $\rho^*(C,G)$, suffer from significant increase. Thus, the ablation studies verify the contribution of the proposed objective function to learning compact, balanced, and fair representations for clustering tasks.

\begin{table}[t]
\renewcommand{\arraystretch}{1.5}
	\centering
	\begin{threeparttable}
        \tabcolsep = 0.4em
	\caption{Clustering results on the datasets with continuous sensitive attributes.}
	{\footnotesize
	\begin{tabular}
   {c|ccc|ccc}
	\Xhline{1.05pt}

    	 & \multicolumn{3}{c|}{Census} & \multicolumn{3}{c}{Crime}\\

	\cline{2-7}

     & $\mathrm{ACC} $ & $\mathrm{NMI} $ 
     & $\rho^*(C, G)$ & $\mathrm{ACC} $ & $\mathrm{NMI} $  
     & $\rho^*(C, G)$  \\
     
    \hline
    DFC 
    & 67.3 & 10.2 
    & 9.8 
    & 62.2 & 9.6 
    & 35.3 \\

    FCMI
    & 70.6 & 12.8 
    & 4.8 
    & 67.0 & 9.1 
    & 33.7 \\

    Ours
    & \textbf{79.1} & \textbf{26.0} 
    & \textbf{0.4}
    & \textbf{69.3} & \textbf{12.1} 
    & \textbf{10.5} \\

 \Xhline{1.05pt}
	\end{tabular} 
	}

          \label{table-continuous}
 \end{threeparttable}
\end{table}

\begin{figure}[t] 
    \centering
\subfloat[Clustering-related metrics]{
    \includegraphics[width=0.8\linewidth]{./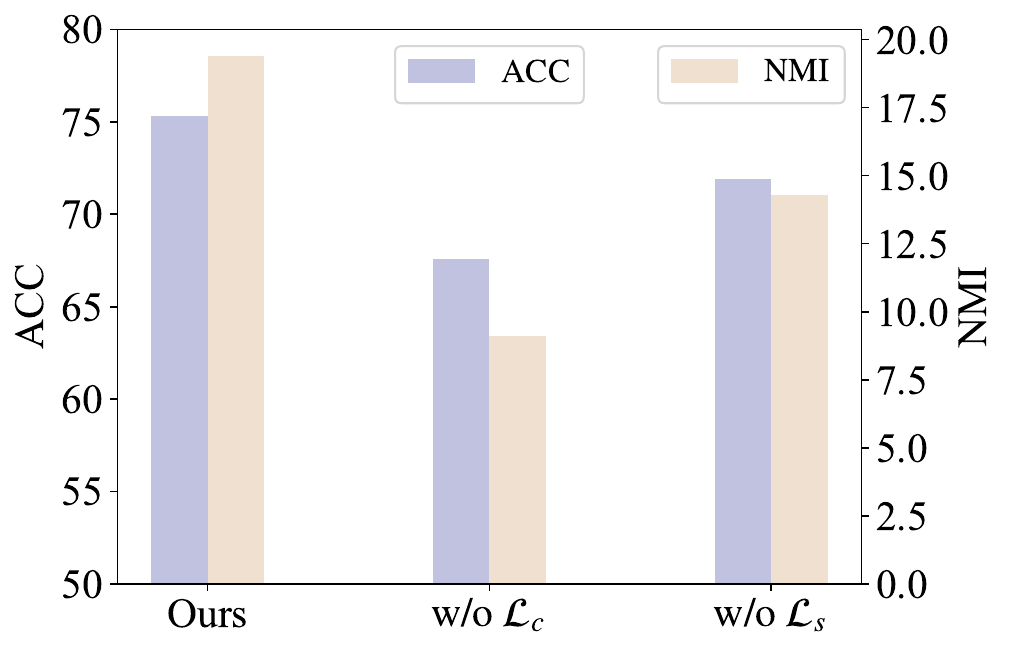}
}

\subfloat[Fairness-related metrics]{
    \centering
    \includegraphics[width=0.8\linewidth]{./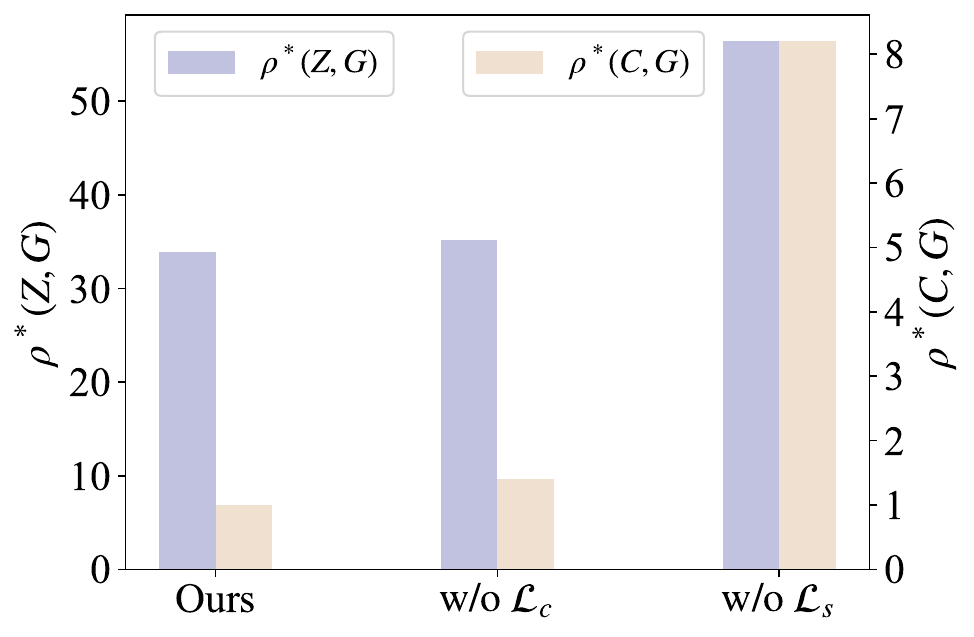}
}

    	\caption{Ablation study on the Census dataset
     }
    	\label{fig-ablation}
\end{figure}

\subsection{ANSW3: Performance on Transferable Representations }
Before we transfer the learned representations to other downstream tasks, we first test the fairness of the representations learned by different deep methods. Specifically, we calculate the statistical dependence between representations and the sensitive variables, $\rho^*(Z,G)$, for both MNIST-USPS and Census datasets. As described in Fig.\ref{fig-representation-fairness}, AE presents the worst performance since it does not consider fairness. The representations of FCMI and DFC have higher $\rho^*(Z,G)$ because they learn representations by minimizing $I(C;G)$, which may not necessarily decrease the statistical dependence of $Z$ w.r.t. $G$. FRL is the model in \cite{madras2018learning} that learns fair and transferable representations. Similar to ours, FRL also directly removes sensitive information at the presentation level. The method is inferior to ours, especially for the continuous case, since they can only handle binary-sensitive attributes. The results support our motivation to directly minimize $I(Z;G)$.

Then, we transfer the learned representations to classification tasks. We focus on classification because the representations learned from the clustering task, which is unsupervised,  exhibit cluster structures as displayed in Fig.\ref{fig-visual}. Therefore, these representations could facilitate classification tasks, especially when only a few labels are available. Specifically, we freeze the encoder learned from the fair clustering task and use it as a representation generator. We then use the representations as inputs to train a fully connected classifier with the same structure as the network of estimating $I(Z;G)$. For comparison, we take representations generated by (\romannumeral1) DFC, (\romannumeral2) FCMI, (\romannumeral3) auto-encoder (AE), and (\romannumeral4) FRL without labels \cite{madras2018learning} \footnote{We remove the label-related term in the corresponding objective function and use binarized sensitive attributes in the continuous case} as inputs to train the classifier. We employ two datasets, one with discrete sensitive variables (MNIST-USPS) and the other with continuous sensitive variables (Census). For each dataset, we only sample 128 data as training data and the rest as testing data. The labels of the two datasets are digits and whether the average income of a tract is greater than 50,000 USD, respectively. We use accuracy to evaluate classification performance. Furthermore, we use generalized demographic parity $\Delta_{GDP}$ \eqref{definition-SP} and $\rho^*(\widehat{Y},G)$ as the fairness metrics of discrete and continuous sensitive attributes, respectively. As shown in Fig.\ref{fig-transfer}, the representations learned from clustering tasks achieve higher classification accuracy than those without clustering objective functions (AE and FRL). Our method achieves the highest classification accuracy, which is close to 1 although we only have 128 training data. The reason may be that the clustering objectives in our method help generate representations with compact clusters, which may boost classification tasks. On the other hand, our method also obtains the lowest $\Delta_{GDP}$, meaning that the classification results based on the representations of our method are fairer. This can be explained by that we minimize $I(Z;G)$ directly, which also bounds $\Delta_{GDP}$ as Proposition \ref{proposition-SP} states. For the Census data, our method achieves the lowest $\rho^*(\widehat{Y},G)$ while retaining utility as much as possible. The other methods are inferior to ours since they cannot handle continuous sensitive variables. In summary, the clustering objectives can produce representations that can boost few-shot classification tasks. Furthermore, the fairness of downstream tasks can also be guaranteed since we minimize $I(Z;G)$ instead of $I(C;G)$.

\begin{figure}[t] 
    \centering
    \includegraphics[width=0.75\linewidth]{./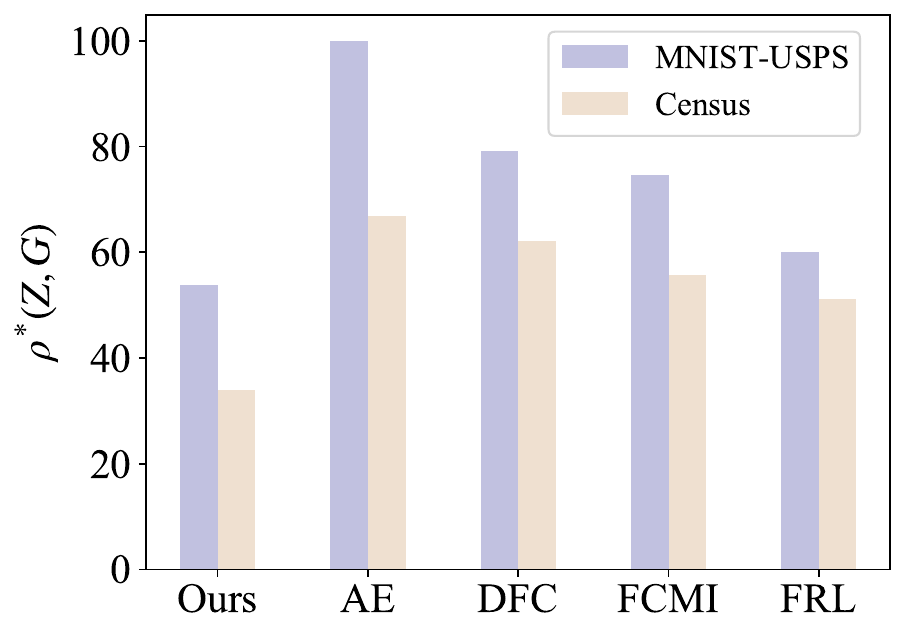}
    	\caption{The fairness of the learned representations.
     }
    	\label{fig-representation-fairness}
\end{figure}

\begin{figure}[t] 
    \centering
\subfloat[MNIST-USPS]{
    \includegraphics[width=0.8\linewidth]{./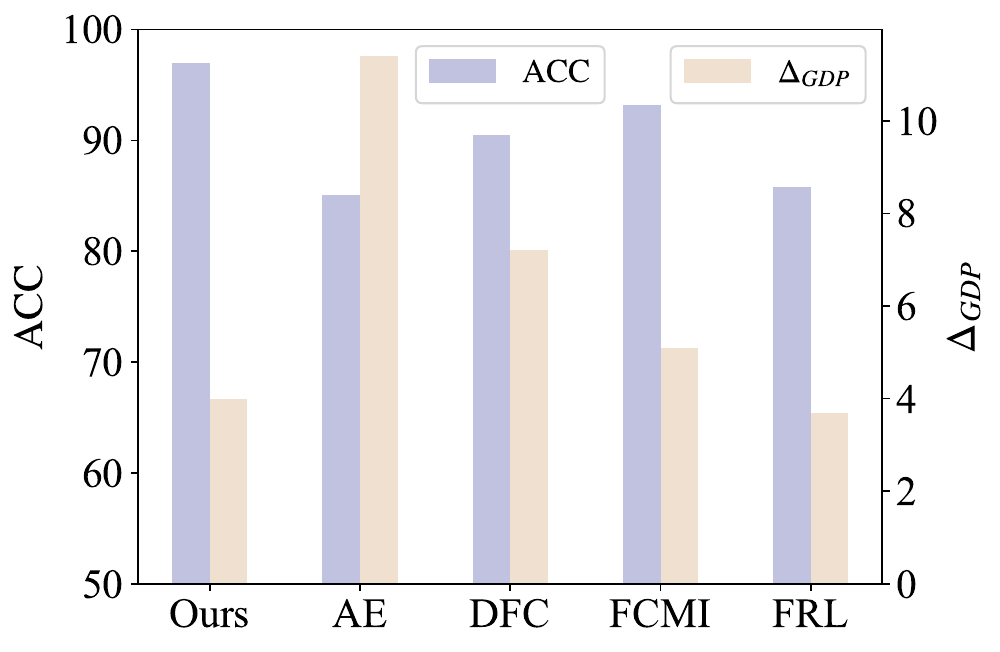}
}

\subfloat[Census]{
    \centering
    \includegraphics[width=0.8\linewidth]{./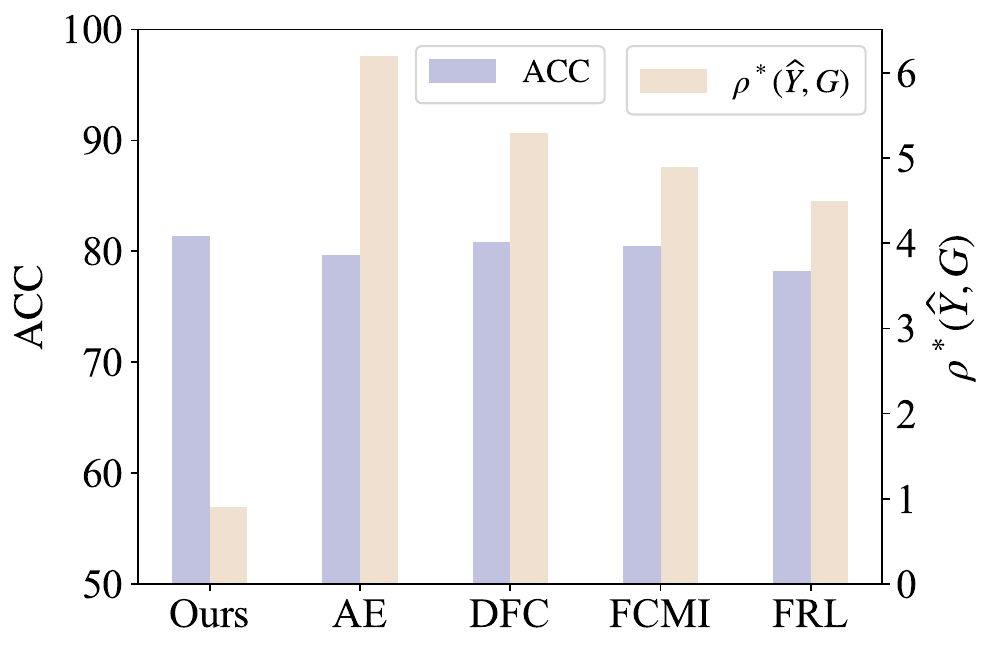}
}

    	\caption{The performance of transferring to classification tasks.
     }
    	\label{fig-transfer}
\end{figure}

\section{Conclusion}
\label{sec:conclusion}
In this paper, we proposed a deep fair clustering method that can handle both discrete and continuous sensitive attributes. Furthermore, we investigated the transferability of representations learned from clustering tasks to unseen tasks. Extensive experiments, including discrete and continuous sensitive attributes, were conducted. The results showed that our method can achieve superior performance in terms of both clustering and fairness performance.


    \bibliographystyle{ieeetr}
\bibliography{abrv, references}

 \appendix

\subsection{Proof of Proposition \ref{proposition-SP}}
The proof of this proposition mainly comes from \cite{gupta2021controllable}, but with some modifications to adapt to the definition of generalized DP. First, we provide the following lemma.
\begin{lemma}
\begin{align}
       I(Z;G)\geq h\left(\eta\max_{t,t^{\prime}}\left \lVert p(Z|G = t) - p(Z|G = t^{\prime}) \right\rVert_v \right ), \label{IZG}
\end{align}
where $\eta = \min_{t\in[T]} p(G = t)$, $\lVert\cdot\rVert_v$ denotes the variational distance between two probabilities, and $h(v)$ is
\begin{align}
h(v) = \max\left(\log(\frac{2+v}{2-v}) - \frac{2v}{2+v}, \frac{v^2}{2} + \frac{v^4}{36}  +  \frac{v^6}{288} \right).
\end{align}
    \label{lemma-1}
\end{lemma}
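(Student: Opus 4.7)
The plan is to establish the lemma in two stages: first, a data-processing / coarsening reduction that turns the multi-class mutual information into a binary subproblem with an effective total-variation gap of exactly $\eta V^*$, where $V^* := \max_{t,t'}\|p(Z|G=t)-p(Z|G=t')\|_v$; then, an application of two sharp lower bounds of $D_{KL}$ in terms of total variation whose maximum is precisely $h$.

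For the reduction, let $t^*, t^{*'} \in [T]$ attain $V^*$. Since $p(G=t^*), p(G=t^{*'}) \geq \eta$, I would construct a randomized coarsening $\bar G$ of $G$ that thins the marginal mass at $t^*$ and $t^{*'}$ down to exactly $\eta$ each, routing the residual probability into a common auxiliary symbol. This map is a (stochastic) function of $G$, so the data-processing inequality gives $I(Z;G) \geq I(Z;\bar G)$. A further binary coarsening of $\bar G$ --- indicator of $\{\bar G = t^*\}$ versus $\{\bar G = t^{*'}\}$, with the ``other'' branch symmetrically split between the two arms --- produces two distributions $P'$ and $Q'$ on $Z$ with $\|P'-Q'\|_v = \eta V^*$ by direct computation, and satisfying $I(Z;G) \geq D_{KL}(P'\|Q')$ after another data-processing step. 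I would then invoke (i) Vajda's tight inequality $D_{KL}(P'\|Q') \geq \log\frac{2+\eta V^*}{2-\eta V^*} - \frac{2\eta V^*}{2+\eta V^*}$, and (ii) the series bound of Fedotov et al., $D_{KL}(P'\|Q') \geq (\eta V^*)^2/2 + (\eta V^*)^4/36 + (\eta V^*)^6/288$. Since $h$ is defined as the maximum of (i) and (ii), these two inequalities together yield $I(Z;G) \geq h(\eta V^*)$.

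The main obstacle I expect is the reduction step itself: a naive combination of data processing and Jensen's inequality produces $\eta \cdot h(V^*)$, with the factor of $\eta$ appearing \emph{outside} $h$ rather than inside its argument. Getting $\eta$ inside $h$ requires the mass-thinning construction above, and the technical crux is verifying that symmetrically mixing the ``other'' branch into the two arms of the binary coarsening preserves the total-variation gap at exactly $\eta V^*$ (rather than diluting it), while keeping the overall map a valid stochastic function of $G$ so that data processing applies cleanly. Once this bookkeeping is in place, the second stage is a direct invocation of known sharp $D_{KL}$-vs-TV inequalities and requires no further work.
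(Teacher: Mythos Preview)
The paper does not prove this lemma itself; it defers entirely to \cite{gupta2021controllable}. Your two-stage plan---a data-processing reduction to a symmetric binary problem, then Vajda's and Fedotov et al.'s KL--TV bounds---is the right architecture and matches the standard route in the cited reference.

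There is, however, a genuine gap in the reduction. After your symmetric binary coarsening to $B\in\{0,1\}$ with $P(B=0)=P(B=1)=\tfrac12$, data processing yields $I(Z;G)\ge I(Z;B)=\tfrac12 D_{KL}(P'\,\Vert\,M')+\tfrac12 D_{KL}(Q'\,\Vert\,M')$ with $M'=\tfrac12(P'+Q')$, \emph{not} $D_{KL}(P'\,\Vert\,Q')$. The inequality $I(Z;G)\ge D_{KL}(P'\,\Vert\,Q')$ that you wrote is false in general: take $T=2$, equal priors, and mutually singular conditionals; then $I(Z;G)=\log 2$ while $D_{KL}(P'\,\Vert\,Q')=\infty$. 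Relatedly, a direct computation under your stated construction gives $\lVert P'-Q'\rVert_v=2\eta V^{*}$, not $\eta V^{*}$. Both slips are repaired simultaneously once you apply the KL--TV bounds term-by-term against the midpoint: since $\lVert P'-M'\rVert_v=\lVert Q'-M'\rVert_v=\tfrac12\lVert P'-Q'\rVert_v=\eta V^{*}$, one obtains $I(Z;G)\ge I(Z;B)\ge \tfrac12 h(\eta V^{*})+\tfrac12 h(\eta V^{*})=h(\eta V^{*})$. So your thinning-then-symmetric-split construction is sound; the intermediate step must go through the Jensen--Shannon decomposition rather than a direct KL between the two arms.
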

The proof of Lemma \ref{lemma-1} is in \cite{gupta2021controllable}. Then, by the definition of GDP, we have 
\begin{align}
&\Delta_{GDP} (\lambda, G) \notag\\
= &\max_{l, t\neq t^{\prime}}\left|p( \widehat{Y} = l |G = t) - p( \widehat{Y} = l |G = t^{\prime}) \right|\label{gdp-1}\\
= & \max_{l, t\neq t^{\prime}}\Bigg{|} \int_{Z} dZ p( \widehat{Y} = l |Z) p(Z|G =  t)\notag\\
&\;\;\;\;\;\;\;\;\;\;-\int_{Z} dZ p( \widehat{Y} = l |Z) p(Z|G =  t^{\prime})\Bigg{|}\notag\\
=&\Bigg{|} \int_{Z} dZ p( \widehat{Y} = l^* |Z) p(Z|G =  t^*)\notag\\
&\;\;\;\;\;\;\;\;\;\;-\int_{Z} dZ p( \widehat{Y} = l^* |Z) p(Z|G =  t^{\prime*})\Bigg{|}\notag\\
\leq&\int_{Z} dZ p( \widehat{Y} = l^* |Z) \left|p(Z|G =  t^*) - p(Z|G =  t^{\prime*})\right|\notag\\
\leq& \int_{Z} dZ \left|p(Z|G =  t^*) - p(Z|G =  t^{\prime*})\right| \notag\\
= & \left \lVert p(Z|G = t^*) - p(Z|G = t^{\prime*}) \right\rVert_v, \label{gdp}
\end{align}
where $t^*, l^* $ and $t^{\prime*}$ are those reach the maximum of \eqref{gdp-1}. Furthermore, the last inequality holds since $ p( \widehat{Y} = l^* |Z) \leq1$. Bringing \eqref{gdp} back to \eqref{IZG}, we have that 
\begin{align}
    h(\eta \Delta_{GDP} (\lambda, G)) \leq I(Z;G),
    \label{gdp-2}
\end{align}
where $h$ is strictly increasing, non-negative, and convex. Finally, we complete the proof.

\subsection{Definitions of Evaluation Metrics}
The definitions of $\mathrm{ACC}$ and $\mathrm{NMI}$ are as follows: 
\begin{align}
\mathrm{ACC} &= \sum_{i}^N\frac{\mathbb{I}(\mathrm{map}(C_i) = Y_i)}{N}, \label{acc}\\
\mathrm{NMI} &= \frac{\sum_{j,k} N_{jk}\log\frac{N N_{jk}}{N_{j+} N_{+k}}}{\sqrt{\left( \sum_{j} N_{j+}\log \frac{N_{j+}}{N} \right)\left( \sum_{k} N_{+k}\log \frac{N_{+k}}{N} \right)}} \label{nmi},
\end{align}
where $\mathrm{map}(C_i)$ is a permutation mapping function that maps each cluster label $C_i$ to the ground truth label $Y_i$, $N_{jk}, N_{j+}$ and $N_{+k}$ represent the co-occurrence number and cluster size of $j$-th and $k$-th clusters in the obtained partition and ground truth, respectively. The definitions of $\mathrm{Bal}$ and $\mathrm{MNCE}$ are: 
\begin{align}
\mathrm{Bal} &= \min_{k} \frac{|\mathcal{C}_k \cap \mathcal{G}_t |}{N_{k+}},\label{bal}\\
\mathrm{MNCE} &= \frac{\min_k\left(-\sum_t^T \frac{|\mathcal{G}_t \cap \mathcal{C}_k|}{|\mathcal{C}_k|} \log \frac{|\mathcal{G}_t \cap \mathcal{C}_k|}{|\mathcal{C}_k|}\right)}{-\sum_t^T \frac{|\mathcal{G}_t|}{N}\log \frac{|\mathcal{G}_t|}{N}}. \label{MNCE}
\end{align}
Finally, $F_m$ is a harmonic mean of $\mathrm{MNI}$ and $\mathrm{MNCE}$
\begin{align}
F_m = \frac{(1 + m^2) \mathrm{MNI}\cdot \mathrm{MNCE}}{m^2\mathrm{MNI} + \mathrm{MNCE}},\label{Fm}
\end{align}
where $m$ in a constant. For two variables, say $C$ and $G$,  R\'{e}nyi maximal correlation $\rho^*(C,G)$  is estimated as 
\begin{align}
\rho^*(C,G) = \sup_{a, b} \rho(a(C), b(G)),
    \label{equ-expe-1}
\end{align}
where $\rho$ is the Pearson correlation. Following \cite{chen2022scalable}, we approximate \eqref{equ-expe-1} by two neural networks $a$ and $b$. The reliability of this neural approximation has been
verified in \cite{grari2021learning}.

\subsection{Details of Used Datasets}
We list the details of the used dataset in Table \ref{table-datsets}

\begin{table}[h]
\renewcommand{\arraystretch}{1.5}
	\centering
	\begin{threeparttable}
        \tabcolsep = 0.1em
    	\caption{Details of the used data.}
	{\footnotesize
	\begin{tabular}%
 {c|cccc}
	\Xhline{1.05pt}

    Dataset & \#Samples & \#Clusters&   Cluster Label \\
     
    \Xhline{1.05pt}
    MNIST-USPS 
    & 67,291 & 10  & Digit \\

    Reverse-MNIST
    & 120,000 & 10  & Digit \\

    HAR
    & 10,299 & 6  & Activity \\
    
    Office-31
    & 3,612 & 31 & Item Category \\

    MTFL
    & 2,000 & 2  & Gender \\

    Census
    & 74,000 & 2  & Income$\geq$ 50,000 USD \\

    Crime 
    &  1,994 & 2  &Crime rate$\geq$ 0.15 \\
\Xhline{1.05pt}
  Dataset& \#Groups& 
  Sensitive Attributes  & Sensitive Type \\
  \Xhline{1.05pt}
  MNIST-USPS& 2  & Domain Source & Discrete \\
  ReverseMNIST& 2  & Background Color &  Discrete \\
  HAR& 30 &  Subject &  Discrete \\
  Office-31& 2  &  Domain Source & Discrete\\
  MTFL& 2  &  w/ or w/o Glass & Discrete\\
  Census& \textemdash & Population proportion of female & Continuous\\
  Crime& \textemdash &   Population proportion of a  race& Continuous\\

 \Xhline{1.05pt}
	\end{tabular} 
	}

          \label{table-datsets}
 \end{threeparttable}
     	\vspace{-1em}
\end{table}

\end{document}